\documentclass{article}






\usepackage[preprint, nonatbib]{nips_2018}

\usepackage{cite}
\usepackage[utf8]{inputenc} 
\usepackage[T1]{fontenc}    
\usepackage{hyperref}       
\usepackage{url}            
\usepackage{booktabs}       
\usepackage{amsfonts}       
\usepackage{amsthm}         
\usepackage{amsmath}        
\usepackage{nicefrac}       
\usepackage{microtype}      
\usepackage{tikz}           
\usepackage[outdir=./]{epstopdf}       
\usepackage{epspdfconversion}
\usepackage{wrapfig}        

\newtheorem{theorem}{Theorem}
\newtheorem{lemma}{Lemma}
\newtheorem{definition}{Definition}
\newtheorem{corollary}{Corollary}
\newcommand{\repeatthanks}{\textsuperscript{\thefootnote}}

\title{Universal Lipschitz Approximation in Bounded Depth Neural Networks}

%

\author{
  Jeremy E.J. Cohen\thanks{equal contribution}\\
  Perspecta Labs\\
  Basking Ridge, NJ\\
  \texttt{jcohen@perspctalabs.com} \\
  \And
  Todd Huster\repeatthanks\\
  Perspecta Labs\\
  Basking Ridge, NJ\\
  \texttt{thuster@perspectalabs.com} \\
  \And
  Ra Cohen\\
  Perspecta Labs\\
  Basking Ridge, NJ\\
  \texttt{rcohen@perspectalabs.com} \\
}

\begin{document}

\maketitle

\begin{abstract}
  Adversarial attacks against machine learning models are a rather hefty obstacle to our increasing reliance on these models.
  Due to this, provably robust (certified) machine learning models are a major topic of interest.
  Lipschitz continuous models present a promising approach to solving this problem.
  By leveraging the expressive power of a variant of neural networks which maintain low Lipschitz constants, we prove that three layer neural networks using the FullSort activation function are Universal Lipschitz function Approximators (ULAs). 
  This both explains experimental results and paves the way for the creation of better certified models going forward.
  We conclude by presenting experimental results that suggest that ULAs are a not just a novelty, but a competitive approach to providing certified classifiers, using these results to motivate several potential topics of further research.
\end{abstract}

\section{Introduction}


Lipschitz functions are continuous functions whose rate of change has a global upper bound. Such functions regularly occur in the natural world. For a simple example, the speed of a car over the course of a drive is continuous with bounded derivative; traits that imply the Lipschitz property. Suppose that we wish to create an approximation to this function. Traditional machine learning approaches, such as a neural network may attain good approximations to the original function, but they will not necessarily inherit the same upper bound on their rate of change. We will discuss Lipschitz approximation, the study of approximating a Lipschitz function by other functions with the same bound. Finding a class of functions that can approximate any Lipschitz function over a particular domain and range (a Universal Lipschitz Approximator) is a challenging problem. We expand on the work of Anil and Lucas, who found the first Universal Lipschitz Approximator, \cite{Anil2018} by finding more a practical class of functions that form ULAs.

The study of ULAs is primarily motivated by the study of adversarial machine learning. Neural networks have proven themselves to be highly capable in solving many challenging tasks \cite{Krizhevsky2012, Collobert2011}. Unfortunately, it has also been shown that with an imperceptible change to an existing input, they can fail catastrophically \cite{Szegedy2014, Goodfellow2016}. Compounding the problem, several concrete methods already exist to create these malicious distortions or "adversarial examples" \cite{Madry2017, Goodfellow2014, Carlini2016}. Compounding the problesm, empirically effective defenses to model robustness have often been broken soon after their invention \cite{Carlini2016, Athalye2018}. This is due to the fundamental fact that attacks can always take advantage of large gradients that traditional neural networks must have in order to represent their target functions \cite{Huster2018-pprint}. Lipschitz approximation avoids this problem by design.

While adversarial machine learning inspired the creation of Lipschitz function approximation, it is worth mentioning that another application has already been discovered. ULAs can be used to optimize over the set of all $1$-Lipschitz functions. In certain cases, this is the dual formulation of the Wasserstein metric, also known as the earthmover metric, which has been shown to be more stable than KL divergence in the training of GANs \cite{Anil2018}. We hypothesize that the fairly abstract mathematical concept of ULAs will find many other applications.

Because it is the main motivation for ULAs, and a well studied field with many other approaches to compare to, we will focus on the problem of adversarial machine learning. In particular, as it provides us with the richest context, we will operate on the task of image classification.

The rest of this paper is broken up as follows:

\begin{itemize}
    \item Section \ref{sec:related}: A brief survey of work on defenses against adversarial machine learning
    \item Section \ref{sec:ULA}: A discussion on Universal Lipschitz Approximators
    \item Section \ref{sec:2layer}: Evidence suggesting that two-layer FullSort networks do not appear to form ULAs
    \item Section \ref{sec:3layer}: A proof that three-layer FullSort networks do form ULAs
    \item Section \ref{sec:exp}: Experiments demonstrating the benefits of applying ULAs to classification problems
\end{itemize}

\section{Related Work}
\label{sec:related}

Many adversarial machine learning defenses have been proposed. There are two general categories of defenses, uncertified and certified, the difference being whether a defense is empirical or guaranteed. The first defenses to arrive on the scene were uncertified, empirical defenses based on the concept using adversarial training data to inoculate a model against specific types of adversarial examples \cite{Goodfellow2014, Goodfellow2016, Madry2017}. Unfortunately, similar to the flu, it is difficult and costly to inoculate against all types of adversarial examples; many of these approaches were eventually found weak to sophisticated attacks \cite{Athalye2018, Carlini2016}. In an effort to escape the cycle of cat and mouse, researchers began to instead focus on certified approaches. Instead of adding adversarial examples to the training data, they found ways to prove that, for a class of model, the following is true: around a given input, there is a region of the input space where the output of the classifier does not change. The larger the region, the stronger the certificate. 

Methods for generating and verifying these models vary quite wildly. Some of the first approaches evolved from research on the verification of properties of neural networks \cite{Katz2017, Cheng2017}. Leveraging variations on mixed integer linear programming, they were able to precisely compute certifications for smaller models. While the bounds that these methods provide are tight, the computational costs balloon exponentially as model size increases. 

The outer polytope method involves solving the dual of an optimization problem over a convex polytope. Wong showed that by considering a set of $L_\infty$ norm perturbations about an input ($B_\epsilon(x)$) and computing a convex outer bound about its image through the neural network ($A \supseteq f[B_\epsilon(x)]$), they could very successfully penalize a models susceptibility to perturbation \cite{Wong2017, Wong2018}. Perhaps motivated by the criticism about the computational complexity of the approach in their first paper, their second outlines a less computationally intensive method. This approach gives the strongest currently known $L_\infty$ results. As for the $L_2$ norm, randomized smoothing provides impressive stochastic certificates on $L_2$ perturbations in the input space by smoothing model outputs using a Gaussian kernel \cite{Cohen2019}. 

Universal Lipschitz Approximation is not the first Lipschitz based approach to be suggested for defending machine learning classifiers. By enforcing an orthonormality constraint on their weight matrices, Parseval networks have achieved limited success compared to early adversarial training methods \cite{Cisse2017}. Lipschitz margin training fared better than Parseval networks by using a novel differentiable way of computing an upper bound on the Lipschitz constant in order to penalize it during training \cite{Tsuzuku2018}. Most impressively, Raghunathan proposed a network architecture that can be trained to produce high quality classification results and certificates quite quickly \cite{Raghunathan2018}. Their approach, a semi-definite program that bounds the model Lipschitz constant, limited them to operating only on two layer networks. However, all traditional networks following this approach are greatly limited; Huster showed that no traditional neural network such as those using the ReLU activation function can act as a universal Lipschitz approximator. Traditional networks must in fact choose between either the expressive power necessary to approximate a function, or the Lipschitz condition \cite{Huster2018-pprint}. The search continued into non-traditional neural networks, and in 2018 Anil and Lucas showed that by changing out a standard monotonically increasing activation function for a sorting activation function, their networks, given arbitrary depth, form a ULA \cite{Anil2018}. 

\section{Universal Lipschitz Approxmiation}
\label{sec:ULA}

\subsection{Definitions}

Lipschitz functions are formally defined as follows:

\begin{definition}\label{def:Lipschitz}
Let a function $f$ be called $M$-Lipschitz continuous if there exists $M \geq 0$ such that

\begin{eqnarray}
\forall x_{1}, x_{2} \in X: d_{Y}(f(x_{1}),f(x_{2})) \leq M d_{X} (x_{1},x_{2})
\end{eqnarray}

where $d_{X}$ and $d_{Y}$ are the metrics associated with spaces $X$ and $Y$, respectively. We will call any $M$ for which this equation is true a Lipschitz constant of $f$. \end{definition}

Note that \emph{the} Lipschitz constant of $f$ is the minimum of all Lipschitz constants for $f$.

Intuitively, a Lipschitz function is, to a certain degree, smooth. Small changes in the input will produce measured changes in the output. For example, as we alluded to earlier, all differentiable functions with bounded derivative are Lipschitz continuous. Perhaps more importantly, given any finite dataset where different classes are separated in the input space by at least a  distance of $c$, there exists a Lipschitz function with Lipschitz constant $c/2$  that correctly classifies all points \cite{Huster2018-pprint}. Therefore, any classification problem can be restated in the structure of a Lipschitz function approximation problem.

\begin{definition}\label{def:lipappx}
Let $f: X \rightarrow Y$ be an $M$-Lipschitz continuous function, and let $\epsilon > 0$. We say that a function $g$ $\epsilon$-approximates $f$ if $M$ is a Lipschitz constant of $g$ and

\begin{equation}
\forall x \in X: d_{Y}(f(x), g(x)) < \epsilon
\end{equation}
\end{definition}

These problems can come in many shapes and sizes. We focus on the case where $X$ and $Y$ are both normed spaces. Note that if 
$f$ is $M$-Lipschitz, then $f/M$ is $1$-Lipschitz. This implies that should $g$ $\epsilon$-approximate $f$, $g/M$ must $\epsilon / M$
-approximate $f/M$. Thus, it suffices to consider the case where $M$ is $1$. Hence, given a specific normed domain and range, we wish
to be able to approximate any $1$-Lipschitz function between them. A universal Lipschitz approximator is able to do the following:

\begin{definition}
Fix norm spaces $X$ and $Y$. A set of functions, $G$ that map $X$ into $Y$ is called a Universal Lipschitz Approximator (ULA) if, given any function $f: X \rightarrow Y$ that is $1$-Lipschitz

\begin{equation}
\forall \epsilon > 0, \, \exists g \in G \text{ that } \epsilon \text{-approximates } f
\end{equation}
\end{definition}

There are a couple advantages to preforming Lipschitz approximation rather than standard approximation. Firstly the obvious, if the Lipschitz constant of a Lipschitz continuous model is low, its susceptibility to adversarial examples is reduced \cite{Gouk2018}. For example, suppose that we have a classifier $f: X \rightarrow \mathbb{R}^2$ that is $1$-Lipschitz and at point $x$ has value $f(x) = [1.5, 0.5]$. That is, it predicts that $x_0$ belongs to the first class by a margin of $1.5 - 0.5 = 1$. Then, because we know that $f$ is $1$-Lipschitz, we know that for all $\delta \in X$ where $||\delta||_{X} < 0.5||$, $f$ will predict that $x_0 + \delta$ belongs to the first class as well. In other words, if $g : X \rightarrow \mathbb{R}$ where $g(x) = f(x)_1 - f(x)_2$,

\begin{equation}
\begin{split}
g(x_0 + \delta) &= f(x_0 + \delta)_1 - f(x_0 + \delta)_2\\
                &< (f(x_0)_1 - 0.5) - (f(x_0)_2 + 0.5)\\
                &= 1.5 - 0.5 - 0.5 - 0.5\\
                &= 0\\
\end{split}
\end{equation}

. Thus, if the model has a small Lipschitz constant and is confident about its classification on a specific input, a slight perturbation to that input will not change the output of the model. The other major argument for Lipschitz approximation is based on some evidence that smoother models generalize better than those without smoothness guarantees \cite{Gouk2018, Oberman2018}.

Computing a Lipschitz constant of a neural network is not necessarily difficult. An SMT or MILP solver can be used to compute the minimal Lipschitz constant, but this process is computationally intensive and unlikely to be practical for non-trivial networks \cite{Katz2017, Cheng2017}. Another approach is to calculate an upper bound for the minimal Lipschitz constant. One method, proposed by Gouk \cite{Gouk2018}, consists of computing the function norm of the network by considering it layer by layer. Since composed functions inherit the product of their individual Lipschitz constants, given a network with weight matrices $\{W_i\}_{i=1}^k$ and activation function $\sigma$, a Lipschitz constant of the network can be computed by

\begin{equation}
M = \prod_{i=1}^k ||\sigma||_o \cdot ||W_i||_o
\end{equation}

for the operator norm $||\cdot||_o$, specifically chosen with respect to the p-norm in the domain and range $||\cdot||_p$ over which we want our classifier to be robust. For many popular choices of $\sigma$ such as ReLU, the value $||\sigma||_o$ is bounded above by (and sometimes even equal to) $1$. Computation time of $||W_i||_o$ depends on the operator norm, while some are NP-hard to compute, others, such as the operator norm induced by the $L_\infty$ norm take $nm$ time to compute where $n$ and $m$ are the number of rows and columns in the matrix.

Computing the Lipschitz constant is not sufficient for Lipschitz function approximation, a model is also needed. Maintaining expressive power of the model while also enforcing a Lipschitz bound is beyond the capabilities of traditional neural networks. Thus, these networks can only act as Lipschitz approximators on a very limited class of functions. A class of functions so limited that it does not even include the absolute value function \cite{Huster2018-pprint}! Anil and Lucas suggest that this is due to the norm-decreasing properties of these traditional monotonically increasing activation functions. They claim that norm-preserving non-linear activation functions can get around this hurdle, specifically suggesting the GroupSort activation function \cite{Anil2018}. 

For the rest of this paper, we will be using the $L_\infty$ norm as well as the $L_2$ norm in some of our empirical evaluations.

\subsection{GroupSort Networks}

The GroupSort activation function breaks the pre-activation neurons into groups of a specified group size and sorts them by their value. When group size is two, GroupSort is equivalent to the Orthogonal Projection Linear Unit (OPLU) \cite{Chernodub2016}: for each pair of neurons, the lowest value is assigned to the first output and the highest value is assigned to the second. When group size is equal to the number of neurons, it is a FullSort and the output is a sorted list of all the inputs.  

Anil and Lucas prove that using a set of 1-Lipschitz linear basis functions (i.e., a single norm-constrained linear layer) and an arbitrary number of OPLU operations on different combinations of previously computed functions, one can approximate \emph{any} $1$-Lipschitz function while maintaining the $1$-Lipschitz property of the model \cite{Anil2018}. Relating their result to the Universal Approximation theorem \cite{Cybenko1989}, both results guarantee that a class of neural networks can arbitrarily approximate a target class of functions. However, the Universal Approximation theorem guarantees that networks of \emph{two} layers can arbitrarily approximate all functions of interest, and as of yet, there has been no such proof that any type of neural networks of constant bounded depth can Lipschitz approximate all Lipschitz functions.

Experimentally, Anil and Lucas verified that GroupSort can be used as a drop-in replacement for ReLU activations with little-to-no impact on the model accuracy \cite{Anil2018}. It was also shown that two-layer networks with a FullSort activation were easily able to approximate some toy $1$-Lipschitz functions, while networks using OPLU needed several layers to achieve the same end. These experiments act as the motivation behind our search for bounded depth ULAs. 

\section{Two Layer ULAs}
\label{sec:2layer}

The universal approximation theorem states that any continuous function on the $n$
dimensional unit cube can be arbitrarily approximated by a two layer neural network
with a sigmoidal activation function \cite{Cybenko1989}. At first, one may expect a direct
Lipschitz analogue to this: all $1$-Lipschitz functions can be arbitrarily Lipschitz approximated by
a two layer neural network that uses a sorting activation function.

\begin{figure}[t]
    \includegraphics[width=.33\linewidth]{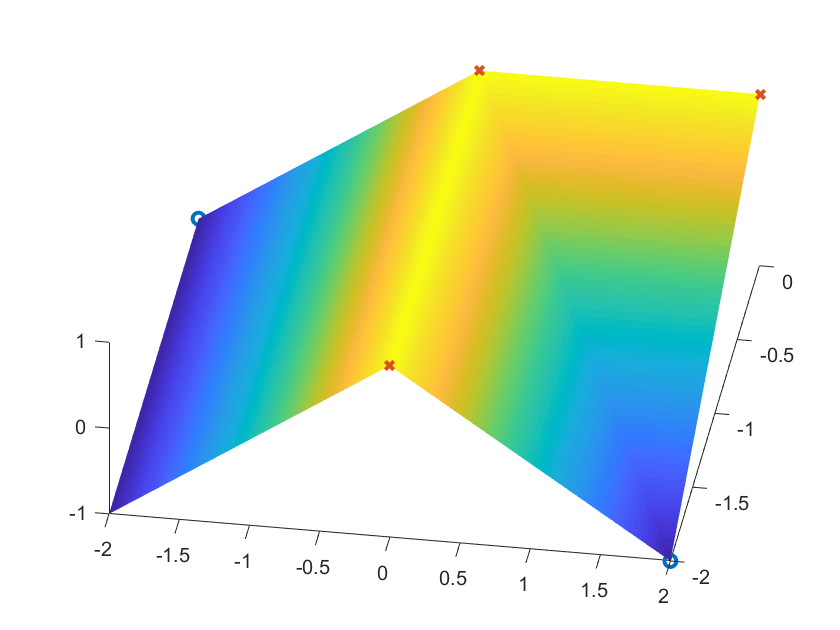}
    \includegraphics[width=.33\linewidth]{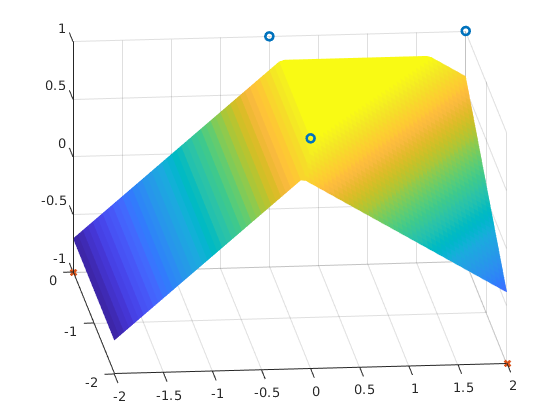}
    \includegraphics[width=.33\linewidth]{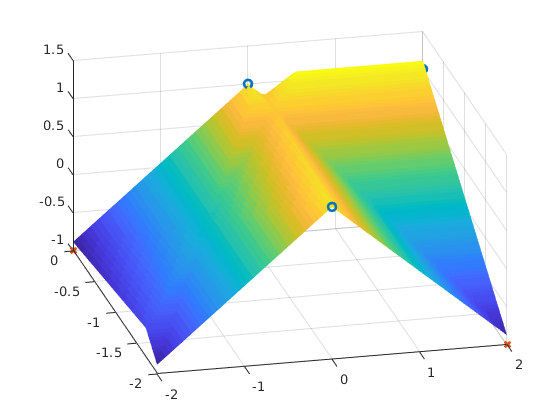}
    \caption{The half gable function (left), our best two layer Lipschitz FullSort NN approximation (center), and a three layer Lipschitz FullSort NN approximation (right).}
    \label{fig:gable}
\end{figure}

In a sense that will become more technically clear in later proofs, FullSort supersedes
all GroupSort-$n$ style activation functions in expressive power. Hence, we will focus
on FullSort networks for the time being. Through experimentation, we found that FullSort
networks of two layers seem to fail to learn even simple two dimensional functions on which three layer networks succeed.
Equivalently trained architectures are compared in Figure \ref{fig:gable}.

We call the left-most Lipschitz function from Figure \ref{fig:gable} the half gable. It can either
be defined by the three hyperplanes that compose it, or the five points via which it
is the only 1-Lipschitz function that satisfies them. If we were to try to most simply construct
a two layer FullSort network that equals this function, we may design our first layer
as follows

\begin{equation}
    \label{eq:naive}
    FullSort(
    \begin{bmatrix}
        0 & 1 \\
        -1 & 0 \\
        1 & 0 
    \end{bmatrix}
    \begin{bmatrix}
        x \\
        y
    \end{bmatrix}
    +
    \begin{bmatrix}
        1 \\
        1 \\
        1
    \end{bmatrix}
    )
\end{equation}

\begin{wrapfigure}{L}{0.33\textwidth}
    \includegraphics[width=0.33\textwidth]{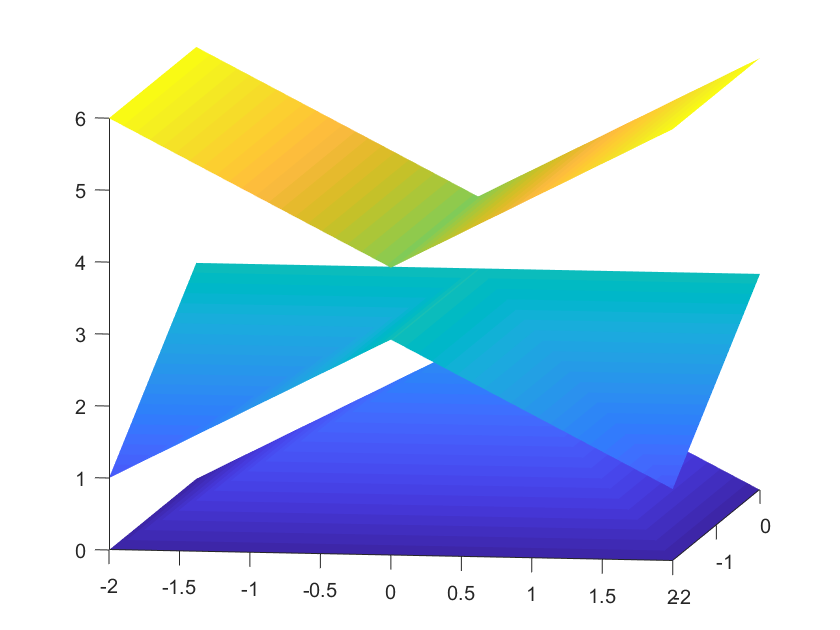}
    \caption{The three different possible outputs given the first layer in equation \ref{eq:naive} and a one-hot second layer.}
    \label{fig:sorts}
\end{wrapfigure}

. This produces a sorted vector of values that each correspond to the value of one of the
three hyperplanes which compose the half gable. Suppose the weight matrix of the second
layer is one of the following: $[0,0,1]$, $[0,1,0]$, or $[1,0,0]$. Then our network computes
one of the three functions displayed in Figure \ref{fig:sorts}, none of which are the half
gable. While this does not constitute a proof, it does suggest that if there exists a two layer
$1$-Lipschitz neural network that closely approximates this function, it will not be a simple
construction. We can compare this to the fairly simple three layer $1$-Lipschitz network which
is exactly equal to the half gable, and whose existence is proven in section \ref{sec:3layer}.

Another way that we can try to understand these results is to think about the fundamental
difference between two and three layer FullSort networks. With two layers, the network
only has one sort. This means that it computes some number of $1$-Lipschitz linear functions,
sorts them, and then takes a linear combination of all of the resultant values. The weights of this
linear combination can not change depending on the input. However, in three layer networks,
there are two sorts. This allows for the network to throw away information between the sorts, 
and to better adapt to its particular input, concepts that will be taken advantage of in Theorem \ref{thm:ULA}.

\section{Three Layer ULAs}
\label{sec:3layer}

In order to prove that three layer FullSort networks are ULAs, we need to first shift domains and define a couple terms. Every neural network with sort activations represents a Piecewise Linear Function. 

\begin{definition}[Piecewise Linear Functions (PWL)]\label{def:PWL}
A function $f:D \subseteq \mathbb{R}^n \rightarrow \mathbb{R}$ is a PWL if $\,\exists L \in \mathbb{N}$ distinct linear 
functions $f_l$ such that $\forall x \in D, \exists i \leq L$ such that $f(x) = f_i(x)$.
\end{definition}

We are interested in PWLs that are continuous. This sort of PWL has been studied quite a bit in the field of electrical
engineering \cite{Tarela1, Tarela2, Wang, Wilkinson}. Such studies have produced several useful representations
of continuous PWLs. We will lean heavily upon the following representation whose universality was proved in \cite{Tarela2} 
and which is notated similar to its presentation in \cite{Wang}.

\begin{lemma}[Lattice Representation]\label{lem:LR}
Let $f : D \subseteq \mathbb{R}^n \rightarrow \mathbb{R}$ be a continuous PWL function with $L$ distinct linear functions $f_l$.
Then $f$ can be represented by $(F, S)$ where $F = \{f_l\}$ and $S$ is a set of subsets of $F$ such that
\begin{equation}
\min_{s_i \in S} (\max_{f_j \in s_i} (f_j))
\end{equation}
\end{lemma}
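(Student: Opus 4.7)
The plan is to build the representation explicitly from the polyhedral structure of $f$. A continuous PWL on $D \subseteq \mathbb{R}^n$ with $L$ distinct affine pieces always induces a finite decomposition of $D$ into closed, connected, convex polyhedral cells $C_1, \ldots, C_M$, on each of which $f$ agrees with a unique piece $f_{l(k)} \in F$. This polyhedral subdivision is standard and underlies the Tarela/Wang representation cited above; I would invoke it as a preliminary step and then build $(F, S)$ cell by cell.

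Given the decomposition, for each cell I would define
\[
s_k \;=\; \bigl\{\,j \in \{1, \ldots, L\} : f_j(x) \leq f_{l(k)}(x) \text{ for all } x \in C_k \,\bigr\},
\]
and set $S = \{s_1, \ldots, s_M\}$. Since $l(k) \in s_k$, each subset is nonempty and $\max_{j \in s_k} f_j = f_{l(k)}$ on $C_k$. The easy half of the identity
\[
f(x) \;=\; \min_{s_i \in S} \max_{f_j \in s_i} f_j(x)
\]
then follows immediately: at any $x \in C_k$, the $s_k$ branch of the outer $\min$ already delivers exactly $f(x)$, so the right-hand side is at most the left-hand side.

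The main obstacle is the reverse inequality: for each $x_0 \in D$ and each cell $C_k$, I must exhibit a $j \in s_k$ with $f_j(x_0) \geq f(x_0)$. When $x_0 \in C_k$ this is trivial (take $j = l(k)$), so suppose $x_0 \in C_{k_0}$ with $k_0 \neq k$. My plan is to connect a generic interior point of $C_k$ to $x_0$ by a line segment that crosses a finite sequence of shared facets of the cell arrangement; at each crossing, continuity of $f$ forces the two adjacent pieces to coincide on the facet, so their difference is a linear function vanishing there and of definite sign on each side. Propagating these sign relations inductively along the path lets me identify a piece that remains weakly below $f_{l(k)}$ everywhere on $C_k$ (and therefore lies in $s_k$) while still attaining at least $f(x_0)$ at the endpoint. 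This combinatorial propagation is the technical heart of the argument and is the step carried out carefully in the Tarela and Wang references; once it is in hand, both inequalities align and the lattice representation follows.
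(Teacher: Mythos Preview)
The paper does not actually prove Lemma~\ref{lem:LR}; it states the result and attributes the proof to \cite{Tarela2}, with notation following \cite{Wang}. There is therefore no in-paper argument to compare your proposal against. What you have sketched is precisely the standard Tarela--Mart\'{\i}nez construction that those references carry out: build the polyhedral cell decomposition, take for each cell the set of pieces dominated on that cell by the active piece, and verify the two inequalities, the harder of which is handled by the path-across-facets propagation you describe. Your outline is faithful to that source, and you correctly identify the propagation step as the technical core that needs the careful treatment found in the cited works rather than a one-line remark.
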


It was shown in \cite{Tarela1} that the order of the $\max$ and $\min$ operations in this representation is arbitrary and that
any function represented with $\min$ then $\max$ can be represented as $f = \max_{s_i \in S} (\min_{f_j \in s_i} (f_j))$ for 
different $S$. We will refer to this other representation as the dual lattice representation.

In order to show universal approximation, we will instead show the equivalent condition of density in our target space.
To this end, we define what a lattice is, and present an adaptation of the Stone-Wirestrass theorem proved in \cite{Anil2018}.

\begin{definition}[Lattice]\label{def:lattice}
We say that a set of functions $L$ is a lattice if for any $f, g \in L$ we have $\max(f, g) \in L$ and
$\min(f, g) \in L$. ($\max$ and $\min$ are defined point-wise)
\end{definition}

\begin{lemma}[Restricted Stone-Weierstrass Theorem]\label{lem:SWT}
Suppose that $(X, d_X)$ is a compact metric space  with at least two points and $L$ is a lattice in $C_L(X, \mathbb{R})$ 
with the property, known as separating points, that for any two distinct elements $x,y \in X$ and any two real numbers $a$ 
and $b$ such that $|a-b| \leq d_X(x,y)$ there exists a function $f \in L$ such that $f(x) = a$ and $f(y) = b$. Then $L$ is dense 
in $C_L(X,\mathbb{R})$.
\end{lemma}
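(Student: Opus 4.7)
The plan is to adapt the classical lattice proof of the Stone-Weierstrass theorem, taking care at each step that everything constructed stays in the 1-Lipschitz class $C_L(X,\mathbb{R})$. Fix a target $f \in C_L(X,\mathbb{R})$ and $\epsilon > 0$; I aim to produce $g \in L$ with $\sup_{z \in X}|f(z) - g(z)| < \epsilon$.

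First I use the separating-points hypothesis to interpolate $f$ at pairs of points. Since $f$ is 1-Lipschitz, $|f(x) - f(y)| \leq d_X(x,y)$ for all $x, y \in X$, which is exactly the admissibility condition demanded by the hypothesis. Hence for every pair $x, y$ there exists $f_{x,y} \in L$ with $f_{x,y}(x) = f(x)$ and $f_{x,y}(y) = f(y)$. I then run the standard two-stage compactness argument. Fixing $x$, the sets $U_y = \{z \in X : f_{x,y}(z) < f(z) + \epsilon\}$ are open by continuity of $f_{x,y} - f$, and $y \in U_y$, so $\{U_y\}_{y \in X}$ covers $X$. Compactness yields a finite subcover $U_{y_1}, \ldots, U_{y_n}$; define
\begin{equation}
g_x \;=\; \min\bigl(f_{x,y_1}, \ldots, f_{x,y_n}\bigr).
\end{equation}
By the lattice property $g_x \in L$, and by construction $g_x(x) = f(x)$ while $g_x(z) < f(z) + \epsilon$ for all $z$. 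Now let $x$ vary: the sets $V_x = \{z \in X : g_x(z) > f(z) - \epsilon\}$ are open and contain $x$, so finitely many $V_{x_1}, \ldots, V_{x_m}$ cover $X$, and $g := \max(g_{x_1}, \ldots, g_{x_m}) \in L$ satisfies $f(z) - \epsilon < g(z) < f(z) + \epsilon$ for every $z \in X$.

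The main place where this departs from the usual Stone-Weierstrass proof, and the step I expect to need the most care, is verifying that the lattice operations truly keep us inside $C_L(X,\mathbb{R})$ rather than just inside $C(X,\mathbb{R})$. This works on two fronts: the pointwise $\max$ and $\min$ of 1-Lipschitz functions are themselves 1-Lipschitz (a short direct check from the definition), and the lattice hypothesis on $L$ guarantees the finite $\max$/$\min$ combinations stay in $L$. The only other potential snag is the admissibility condition $|a - b| \leq d_X(x,y)$ required by the separating-points hypothesis, and that is precisely why restricting the target class to 1-Lipschitz $f$ is essential — it is what makes the interpolation at $(x, f(x))$ and $(y, f(y))$ always available. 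Everything else is just continuity plus compactness.
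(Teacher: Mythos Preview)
Your argument is the standard lattice-form Stone--Weierstrass proof and it is correct; the only delicate point is that the separating-points hypothesis applies to \emph{distinct} $x,y$, but this is harmless since $x\in U_y$ for every $y\neq x$ and $X$ has at least two points, so the cover goes through. Note, however, that the paper does not actually supply a proof of this lemma: it is stated with a citation to \cite{Anil2018}, so there is no in-paper argument to compare against---your write-up simply fills in what the paper leaves to the reference.
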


Theorem \ref{thm:appx} will show that a  certain subset of continuous PWLs are dense in the space of Lipschitz functions from 
$D$ into $\mathbb{R}$. Theorem \ref{thm:ULA} will show that this particular subset of PWLs can be represented by three layer $1$-Lipschitz FullSort neural networks.

\begin{theorem}[Continuous PWLs of fixed slope are ULAs]\label{thm:appx}
Let $G$ be the set of continuous PWLs defined over a compact subset $D\subset \mathbb{R}^n$ imbued with the $L_\infty$ norm
such that if $f \in G$, the set of component linear functions $F$ all satisfy the property that $||f_j||_{\infty} = 1$. 
$G$ is dense in the set of 1 Lipschitz functions from $D$ into $\mathbb{R}$, $C_L(D, \mathbb{R})$.
\end{theorem}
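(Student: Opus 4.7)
The plan is to apply the Restricted Stone--Weierstrass Theorem (Lemma \ref{lem:SWT}) with the lattice $L := G$ inside the compact metric space $(D, L_\infty)$. Three conditions must be verified: $G \subseteq C_L(D, \mathbb{R})$, $G$ is a lattice, and $G$ satisfies the separating-points condition of the lemma.

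The first two conditions should be quick. Interpreting $\|f_j\|_\infty$ as the $L_\infty \to L_\infty$ operator norm (equivalently, the Lipschitz constant of $f_j$), every linear piece of any $f \in G$ is $1$-Lipschitz. A standard fact about continuous piecewise linear functions then shows that $f$ itself is $1$-Lipschitz, because continuity at piece boundaries stitches the local Lipschitz bounds into a global one, so $G \subseteq C_L(D, \mathbb{R})$. For the lattice property, given $f, g \in G$ the pointwise maximum and minimum are again continuous PWLs whose linear pieces are drawn from the union of pieces of $f$ and $g$; every such piece already has operator norm $1$, so $\max(f, g)$ and $\min(f, g)$ remain in $G$.

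The main obstacle is the separation condition: given $x \neq y$ in $D$ and targets $a, b$ with $|a - b| \leq \|x - y\|_\infty$, we must produce $f \in G$ with $f(x) = a$ and $f(y) = b$. A single linear function of $L_\infty$-operator norm $1$ cannot in general do this when $|a - b| < \|x - y\|_\infty$, so I plan to use a ``tent'': pick an index $k$ achieving $|x_k - y_k| = \|x - y\|_\infty$, build a one-dimensional continuous PWL $h : \mathbb{R} \to \mathbb{R}$ whose only slopes are $\pm 1$ and for which $h(x_k) = a$ and $h(y_k) = b$, and then set $f(z) := h(z_k)$. Existence of $h$ reduces to placing the apex of the tent at a point $m$ between $x_k$ and $y_k$; the condition $|a - b| \leq |x_k - y_k|$ is exactly what makes such an $m$ fit in the interval. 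Each linear piece of $f$ then has the form $z \mapsto \pm z_k + c$, whose coefficient vector $\pm e_k$ has $\ell_1$-norm $1$, so the $L_\infty$-operator norm is $1$ and $f \in G$ as required. With all three conditions verified, Lemma \ref{lem:SWT} immediately yields density of $G$ in $C_L(D, \mathbb{R})$.
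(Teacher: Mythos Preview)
Your proposal is correct and follows the same overall strategy as the paper: invoke Lemma~\ref{lem:SWT} after verifying that $G$ is a lattice and separates points. For the lattice step you argue directly that the pieces of $\max(f,g)$ and $\min(f,g)$ come from the union of the pieces of $f$ and $g$; the paper reaches the same conclusion but routes it through the lattice representation of Lemma~\ref{lem:LR} and its dual.

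The substantive difference is the separating construction. The paper uses the full direction $C=(x-y)/\|x-y\|_\infty$, builds two affine functions with gradients $\pm C$ hitting $(x,a)$ and $(y,b)$ respectively, shows via the inequality $\|\cdot\|_2\ge\|\cdot\|_\infty$ that each undershoots the other target, and then takes their pointwise maximum. Your coordinate-tent construction instead projects onto a single index $k$ realizing $\|x-y\|_\infty$ and solves the resulting one-dimensional interpolation with slopes $\pm 1$. This is more elementary---no inner products, no norm comparison---and it makes the norm condition on the pieces immediate, since each gradient is exactly $\pm e_k$ with $\|\pm e_k\|_1=1$; by contrast the paper's $C$ satisfies $\|C\|_\infty=1$ but not in general $\|C\|_1=1$, so your route also sidesteps a subtlety in matching the operator-norm interpretation of the hypothesis.
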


\begin{proof}
Suppose $f \in G$ has lattice representation $(F, S)$, and $\hat{f} \in G$ has lattice representation $(\hat{F}, \hat{S})$.
Define $\bar{F} = F\cup \hat{F}$ and $\bar{S} = S \cup \hat{S}$. Now see that because $A = B \cup C$ implies $\min_{a\in A} = 
\min(\min_{b\in B}, \min_{c\in C})$ we have

\begin{equation}
\begin{split}
\bar{f} &= \min_{\bar{s_i} \in \bar{S}}(\max_{\bar{f_j} \in \bar{s_i}}) (\bar{f_j}))\\
        &= \min(\min_{s_i \in S}(\max_{f_j \in s_i}(f_j)), \min_{\hat{s_i} \in \hat{S}}(\max_{\hat{f_j} \in \hat{s_i}}) (\hat{f_j})))\\
        &= \min(f, \hat{f})\\
\end{split}
\end{equation}

. Because $\bar{F}$ is a union of functions with norm $1$, it itself is a set of functions with norm $1$. Thus, $\bar{f} \in G$
and therefore $G$ is closed under $\min$. If we start with the dual lattice representations of $f$ and $\hat{f}$, the same
calculation shows that $G$ is closed under $\max$. Thus, $G$ is a lattice.

It remains to be shown that $G$ has the point separation property required by Lemma \ref{lem:SWT}. The idea here is simple. Define 
two linear functions that attempt to bring the two points together and have gradient norm $1$. These two hyperplanes will
each achieve at least one of the points, and be less than or equal to the target value at the other point. They must
intersect, so the maximum of the two will be a continuous PWL in $G$ that separates the points. See figure \ref{fig:th1}.

\begin{figure}[t]
\centering
\begin{tikzpicture}[scale=4]
\draw (0,1) -- (0,0) -- (1,0) -- (1,1);

\draw [fill=black] (0.2, 0.6) circle [radius=0.015];
\node at (0.175, 0.5) {(x, a)};
\draw [fill=black] (0.8, 0.4) circle [radius=0.015];
\node at (0.825, 0.3) {(y, b)};

\draw [dashed] (-0.1, 0.9) -- (0.9, -0.1);
\draw [dashed] (1.1, 0.7) -- (0.3, -0.1);

\draw (0, 0.8) -- (0.6, 0.2);
\draw (0.6, 0.2) -- (1, 0.6);

\node at (0,-0.1) {(0,0)};
\node at (1,-0.1) {(1,0)};
\end{tikzpicture}
\caption{The one dimensional unit interval version of the construction in Theorem \ref{thm:appx}}
\label{fig:th1}
\end{figure}
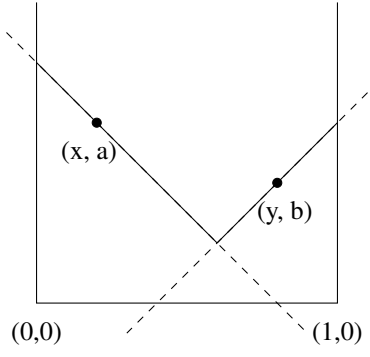

Fix $x, y \in D$ and $a, b \in \mathbb{R}$ such that $|a-b| \leq ||x-y||_\infty$. Without loss of generality, 
suppose that $a \geq b$. Now define $C = (x-y)/||x-y||_\infty$ and

\begin{equation}
f_0(z) = C \cdot z + a - C \cdot x \, , \, f_1(z) = - C \cdot z + b - C \cdot y
\end{equation}

Notice that $f_0(x) = C \cdot x + a - C \cdot x = a$. Similarly, $f_1(y) = b$. Also, as $||C||_\infty = 1$, $f_0, f_1 
\in G$.

Now consider $f_0(y)$,
\begin{equation}
\begin{split}
f_0(y) &= C \cdot y + a - C \cdot x\\
       &= C \cdot (y-x) + a\\
       &= - ||x-y||^2_2/||x-y||_\infty + a\\
    &\leq - ||x-y||^2_\infty/||x-y||_\infty + a\\
       &= a - ||x-y||_\infty\\
    &\leq a - |a - b|\\
       &= a - a + b = b
\end{split}
\end{equation}
because $||z||_2 \geq ||z||_\infty$ and $a \geq b$. An analogous calculation shows that, $f_1(x) \leq a$.

Thus, $\max(f_0, f_1)(x) = a$ and $\max(f_0, f_1)(y) = b$. Because $f_0, f_1 \in G, \max(f_0, f_1) \in G$.
Therefore, $G$ separates points as required in Lemma \ref{lem:SWT}, and $G$ is dense in $C_L(D, \mathbb{R})$.
\end{proof}

Now that we know that these functions are ULAs, it remains to be shown that constant-layer Lipschitz-constrained FullSort
networks are capable of computing them. The following construction is informed by the structure of the lattice representation.

\begin{theorem}[Universal Approximation of Lipschitz functions with Lipschitz networks]\label{thm:ULA}
For all $f \in G$ there exists a 3 layer neural network using the FullSort activation function that equals $f$. Furthermore,
the weight matrices of this network, $W_0$, $W_1$, and $W_2$ satisfy the property that $||W_i||_\infty = 1$.
\end{theorem}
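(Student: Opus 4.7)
The plan is to combine the lattice representation from Lemma~\ref{lem:LR} with a ``shift-block'' construction so that the two FullSort operations of a 3-layer network realize the two nested extremizations of $f = \min_{s_i \in S}(\max_{f_j \in s_i} f_j)$. Because $f \in G$, every piece $f_j \in F$ already has unit $L_\infty$ operator norm, so the gradient of each $f_j$ has $\ell_1$-norm $1$. The obstacle is that a single FullSort sorts its entire input vector globally, so a naive encoding would mix values from distinct subsets $s_i$; I would prevent this by attaching a large additive offset to each block so that the global sort becomes block-wise sorted.

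Concretely, since $D$ is compact, there exists $R$ with $|f_j(x)| < R$ for every $j$ and $x \in D$. Fix any $N > 2R$ and enumerate $S = \{s_1, \dots, s_K\}$. The first layer produces one pre-activation for every pair $(i,j)$ with $f_j \in s_i$, namely $f_j(x) + (i-1)N$. The corresponding row of $W_0$ is just the coefficient vector of $f_j$, which has $\ell_1$-norm $1$, so $\|W_0\|_\infty = 1$; the constants, including the shifts $(i-1)N$, are absorbed into $b_0$. Because consecutive blocks differ by $N > 2R$, every entry of block $i$ is strictly less than every entry of block $i+1$, so after the first FullSort the output is block-sorted: the last position of block $i$, at the fixed input-independent index $p_i = |s_1| + \cdots + |s_i|$, holds exactly $\max_{f_j \in s_i} f_j(x) + (i-1)N$.

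The second layer $W_1$ then consists of $K$ rows, each a standard basis vector pointing at one index $p_i$, with bias $-(i-1)N$ in coordinate $i$; this produces the vector $(\max_{f_j \in s_1} f_j(x), \dots, \max_{f_j \in s_K} f_j(x))$ and satisfies $\|W_1\|_\infty = 1$ since every row has a single nonzero entry equal to $1$. The second FullSort then places $\min_i \max_{f_j \in s_i} f_j(x)$ in coordinate $1$, and the final layer $W_2 = e_1^\top$ (with $\|W_2\|_\infty = 1$) selects that coordinate, yielding exactly $f(x)$.

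The main technical hurdle is the first-layer construction: picking $N$ large enough to guarantee that the global sort factors as a block-sort, and verifying that the shifts can be placed in the biases without perturbing $\|W_0\|_\infty$. Once the block structure is in force, the Lipschitz-norm bookkeeping for $W_1$ and $W_2$ is automatic because every remaining weight row is a one-hot selector, and biases do not contribute to the $L_\infty$ operator norm.
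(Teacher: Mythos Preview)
Your proposal is correct and follows essentially the same construction as the paper: both use the lattice representation together with a block-shift in the first-layer biases (your $(i-1)N$ with $N>2R$ is the paper's ``separation term'' $2i\alpha$ with $|f_j|<\alpha$) so that the first FullSort acts block-wise, then use one-hot rows in $W_1$ to extract the per-block maxima and cancel the shifts, and finally a one-hot $W_2$ to read off the minimum after the second FullSort. The norm bookkeeping is identical as well.
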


\begin{proof}
For our use, suppose that FullSort produces an ascending vector, i.e. $i>j$ implies $FullSort(x)_i \geq FullSort(x)_j$.

Fix $f \in G$ and let $(F, S)$ be its lattice representation. Let each $f_j$ be represented by $f_j(z) = \nabla f_j\cdot z + c_j$.
Remember that because $f \in G$ we know that $||\nabla f_j||_1 = 1$. Define the sequence of running sums $\{\sigma_i\}_{i=1}^{|S|}$
by $\sigma_1 = |s_1|$ and $\sigma_{i} = sigma_{i-1} + |s_i|$. So that $\sigma_k$ corresponds to the number of elements in all 
$s_i$'s where $i \leq k$.

Let $\{W_{0,i}\}_{i=1}^{|S|}$ and $\{B_{0,i}\}_{i=1}^{|S|}$ be defined by stacking the rows $\nabla f_j$ for each $j \in s_i$ to create
$W_{0,i}$, and by stacking the associated $c_j$ to define $B_{0,i}$. Then for each $z\in D$, $W_{0,i}z + B_{0,i}$ is the stacked values
of each hyperplane $f_j \in s_i$ at the point $z$.

Because $|F|$ is finite and $D$ is compact there exists $\alpha \in \mathbb{R}^+$ such that for all $z \in D$ and $f_j \in F$,
$|f_j(z)| < \alpha$.

Now define a $n\times \sigma_{|S|}$ matrix $W_0$ by stacking each $W_{0,i}$, and define $B_0$ by stacking the arrays
$B_{0,i} + ([2i\alpha]_{k=1}^{|s_i|})^t$. We will hereby refer to the added $2i\alpha$ term as the separation term. 
Let $e_i$ be the standard basis vectors of $\mathbb{R}^{|S|}$. Then, we can define $W_1$ to be a $\sigma_{|S|} \times |S|$ 
matrix with the $i$th row equal to $e_{\sigma_i}$, and $B_1 = [-2i\alpha]_{i=1}^{|S|}$. Finally define $W_2$ to be a
$|S| \times 1$ matrix where the first element is $1$ and the rest are $0$s.

We claim that the neural network, $g$, defined by $W_0$, $B_0$, $W_1$, $B_1$, and $W_2$, with the $FullSort$ activation function
equals $f$.

Let $z \in D$. Then

\begin{equation}
\begin{split}
g(z) &= W_2 FullSort(W_1FullSort(W_0z + B_0) + B_1) \\
     &= \min(W_1FullSort([f_j(z)]_{f_j\in s_i, s_i \in S}^t + ([2i\alpha]_{k=1}^{|s_i|})^t) + B_1) \\
     &= \min(\{\max_{f_j \in s_i}(f_j(z) + 2i\alpha) - 2i\alpha\}_{i=1}^{|S|}) \\
     &= \min_{s_i \in S}(\max_{f_j \in s_i}(f_j(z))) \\
     &= f(z) \\
\end{split}
\end{equation}

. Therefore, $g = f$ and for any $f \in G$ there exists a neural network $g$ that equals $f$. We can see that because for all
$f_j, ||\nabla f_j||_1 = 1$, $||W_0||_\infty = 1$ (remember that for matrices, the infinity norm is the maximum absolute row sum).
 Similarly, $||W_1||_\infty = ||W_2||_\infty = 1||$. Thus, these neural networks satisfy the statement of the theorem.
\end{proof}

It should be noted that a slightly stronger version of Theorem \ref{thm:ULA} is possible. Because we use a trick to turn Fullsort into a
sort of GroupSort on the first layer, we could just pad out the excess rows in the sorting block with values either impossibly low or high.
Therefore, it is possible to show that the union of all GroupSort-$n$ networks for each natural number $n$ form a ULA.

Together, Theorem \ref{thm:appx} and Theorem \ref{thm:ULA} imply the following:

\begin{corollary}
Three layer $FullSort$ neural networks with weight matrices that have infinity norm $1$ are dense in $C_L(X,\mathbb{R})$.
\end{corollary}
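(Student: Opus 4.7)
The plan is essentially a two-step chase: Theorem \ref{thm:appx} supplies density of the piecewise linear class $G$ in $C_L(X,\mathbb{R})$, and Theorem \ref{thm:ULA} converts each $f \in G$ into an exact three-layer FullSort realization whose weight matrices have infinity norm $1$. Concatenating these two results should deliver the corollary without any new analytic work.

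Concretely, I would fix an arbitrary $h \in C_L(X,\mathbb{R})$ and an arbitrary tolerance $\epsilon > 0$. Invoking Theorem \ref{thm:appx} produces some $f \in G$ with $\sup_{x \in X} |h(x) - f(x)| < \epsilon$. Applying Theorem \ref{thm:ULA} to this specific $f$ yields a three-layer FullSort network $g$ with $\|W_0\|_\infty = \|W_1\|_\infty = \|W_2\|_\infty = 1$ and $g(x) = f(x)$ for every $x \in X$. Pointwise equality is strictly stronger than approximation, so $g$ itself $\epsilon$-approximates $h$ in the sup metric. Since $h$ and $\epsilon$ were arbitrary, the class of three-layer FullSort networks with infinity-norm-$1$ weights is dense in $C_L(X,\mathbb{R})$.

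The only bookkeeping I foresee is aligning the ambient spaces: Theorem \ref{thm:appx} is phrased for a compact $D \subset \mathbb{R}^n$ equipped with the $L_\infty$ norm, while the corollary names the domain $X$. I would just note up front that $X$ is taken to have exactly this structure so that both theorems refer to the same underlying metric space. Likewise, I would briefly observe that a composition of infinity-norm-$1$ linear maps with the $1$-Lipschitz FullSort activation is automatically $1$-Lipschitz in the $L_\infty$ sense, which keeps $g$ inside $C_L$ as required. Since I do not expect any nontrivial step — the real content lives inside the two theorems already proved — the main risk is simply making the quantifier order ($h$ first, then $\epsilon$, then $f$, then $g$) explicit enough that the reader sees no gap.
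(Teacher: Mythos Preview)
Your proposal is correct and mirrors exactly what the paper does: it states the corollary as an immediate consequence of Theorems~\ref{thm:appx} and~\ref{thm:ULA} without any additional argument. Your added bookkeeping (identifying $X$ with the compact $D$ and checking that the resulting network is $1$-Lipschitz) is a welcome clarification, but the core approach is identical.
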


\section{Experiments}
\label{sec:exp}

We now experimentally validate shallow GroupSort networks against other other Lipschitz-based certification approaches. We compare fully connected networks using both the GroupSort and ReLU activation functions.  We penalized the Lipschitz constant by directly adding a Lipschitz penalty term to the typical cross entropy loss function, H: 

\begin{eqnarray}
L = H(f(x),y) + \lambda \prod_{i=1}^{k}\|W_i\|_p
\end{eqnarray}

where $\{W_i \mid i=1..k\}$ are the weight matrices of $f$.  

This penalization approach is in contrast to the norm-enforcement technique used by Anil and Lucas \cite{Anil2018}, which we found less effective. Rather, if we were using ReLU activations, our approach would be very similar to Lipschitz margin training (LMT) \cite{Tsuzuku2018}. We explored the hyperparameter space, varying over number and width of layers, learning rate, value of $\lambda$, and group size (for GroupSort). We selected our final models based on training data performance, which closely matched test performance. 

We also trained two layer networks using the SDP method in \cite{Raghunathan2018}, searching over layer width and penalty strength for the optimal hyperparameters.  Even though this method is specific to two-layer networks under the $L_{\infty}$ norm, prior to our current work, this method gave the best certified bounds based on the Lipschitz constant. 

For a given Lipschitz constant, GroupSort networks are more expressive than either standard ReLU networks or two layer SDP networks; as shown in \cite{Huster2018-pprint}, these methods cannot closely approximate some simple functions while maintaining tight bounds on the Lipschitz constant.

\begin{figure}[t]
    \includegraphics[width=.33\linewidth]{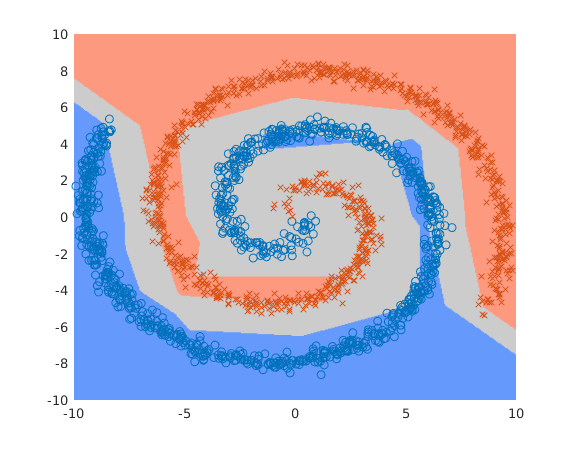}
    \includegraphics[width=.33\linewidth]{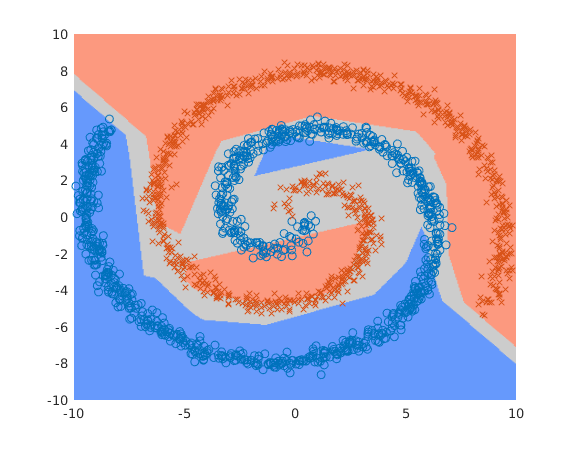}
    \includegraphics[width=.33\linewidth]{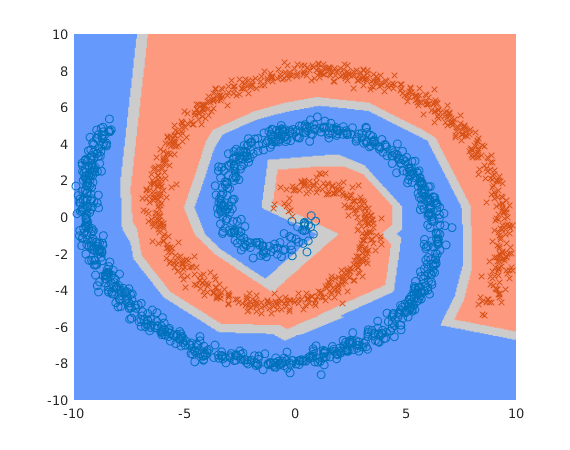}
    \caption{2-D spiral data and regions certified against $L_{\infty}$ perturbations up to 0.1 for best ReLU network (left), SDP network (center), and GroupSort network (right).}
    \label{fig:spiral}
\end{figure}

\begin{wrapfigure}{R}{0.5\textwidth}
    \centering
    \includegraphics[width=0.5\textwidth]{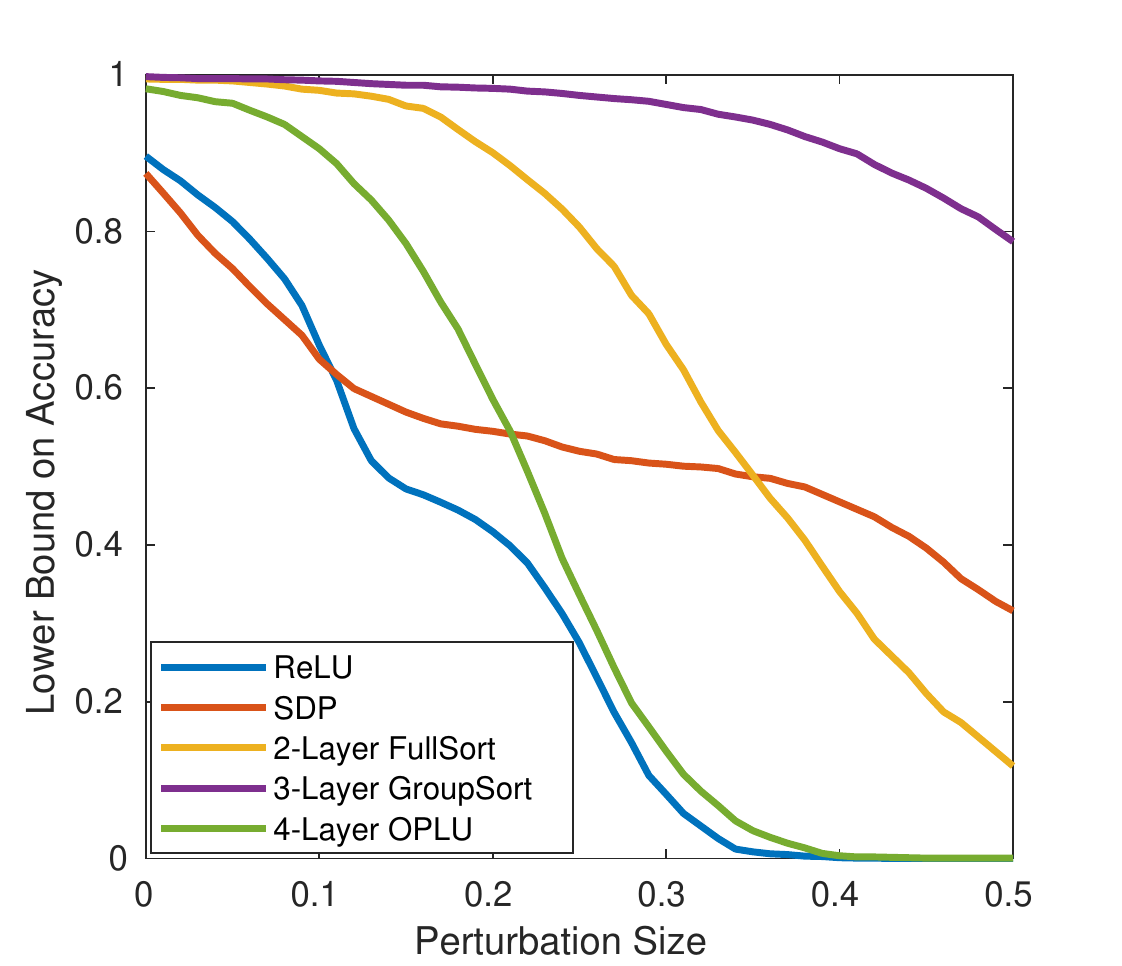}
    \caption{Comparison bounds provided by \newline different certification methods on 2-D spiral data.}
    \label{fig:spiral_plots}
\end{wrapfigure}

We evaluated the algorithms on two fairly simple datasets. First, we have the classic "2-D spiral" dataset. Figures \ref{fig:spiral} and \ref{fig:spiral_plots} respectively show the visual and quantitative results of the different algorithms on this dataset, using the $L_{\infty}$ norm. We also empirically measured the quality of the Lipschitz bound by sampling the space using a fine grid and returning the largest gradient found. As shown in Table \ref{tab:tightness}, using shallow GroupSort, we were able to produce a reasonably tight bound with high accuracy on normal data, while the other methods were off by at least a factor of two. 

We can see that there is a benefit from both multiple layers and larger group sizes. Our intuitive sense is that both the size and number of groups are important: larger groups allow more complex functions to be represented, while having more groups makes optimization easier. Similar to traditional deep learning, depth also appears to be helpful in representing and finding good, compact approximations, at least on this dataset.

\begin{table}
\caption{Certified vs. empirical Lipschitz constant}
\begin{center}\label{tab:tightness}
\begin{tabular}{c c c c}
\hline
Algorithm & Lipschitz Bound & Largest Gradient & Ratio \\
\hline
ReLU  &     9.22& 2.91& 3.17 \\
SDP  &   2.91& 1.30& 2.24 \\
2-Layer FullSort  &  5.85& 4.16& 1.41\\
3-Layer GroupSort-10  &  6.36& 5.29& \textbf{1.20}\\
4-Layer OPLU  &  10.77& 3.90& 2.76\\
\hline
\end{tabular}
\end{center}
\end{table}

We also evaluated on MNIST with both $L_{\infty}$ and $L_2$ norms. In our MNIST experiments, two layer GroupSort-10 networks produced the best results for $L_{\infty}$ perturbations, while four layer OPLU networks worked the best for $L_2$. We compare the bounds given by our GroupSort networks to the other methods in Figure \ref{fig:mnist_plots}. Our approach beat the LMT and ReLU-based networks by a wide margin in both norms. We also beat the SDP method in $L_{\infty}$, a feat of note due to the simpler and more general architecture that we employ.

\begin{figure}[t]
    \includegraphics[width=.5\linewidth]{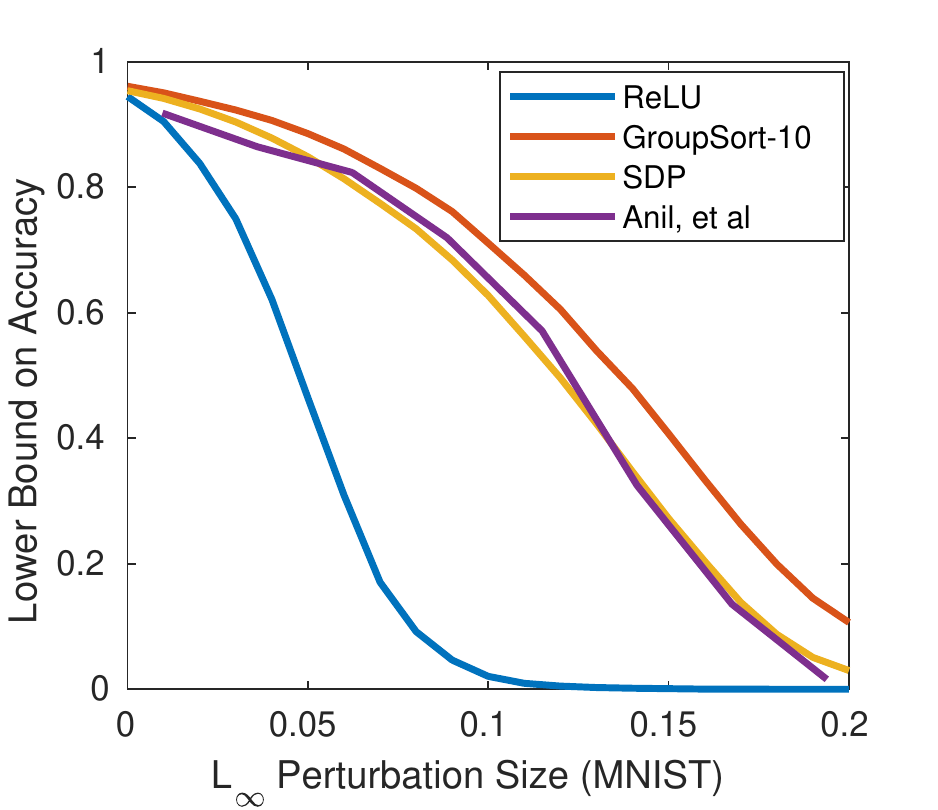}
    \includegraphics[width=.5\linewidth]{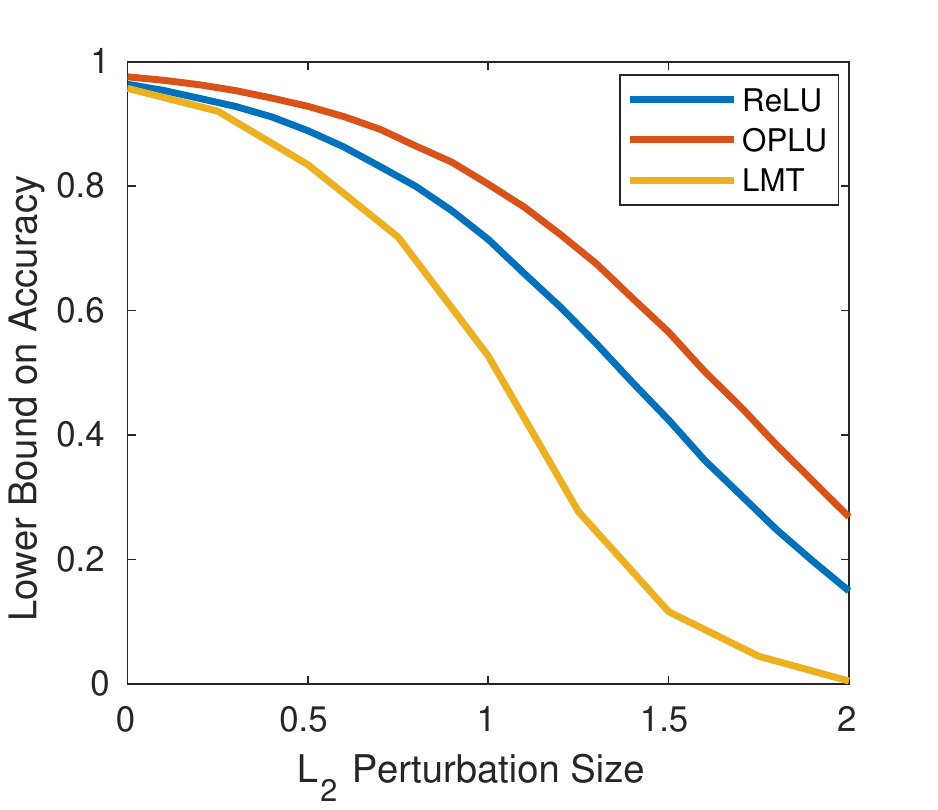}
    \caption{Comparison of bounds provided by different certification methods on MNIST against $L_{\infty}$ and $L_2$ perturbations.}
    \label{fig:mnist_plots}
\end{figure}

It is somewhat disappointing that the two layer network currently gives the best performance for $L_{\infty}$, and that it is not yet competitive with the multi-layered, certified approach of \cite{Wong2018}. In principle, because they are ULAs, three layer GroupSort networks can produce much stronger certificates on the training data than we are currently seeing. That combined with the fact that we have no indication of overfitting, seems to imply that optimization is our primary difficulty when moving from simple 2-D spirals to more complex data.


\section{Conclusions}
Our results show that

\begin{enumerate}
    \item Three layer Lipschitz bounded FullSort networks are ULAs that perform well on both a 2-D spiral dataset and MNIST.
    \item GroupSort networks are capable of providing certified defenses while attaining competitive performance on clean data.
    \item ULAs, particularly sorting activation functions, warrant more study.
\end{enumerate}

One of the major challenges we faced in the application of GroupSort networks is that they are routinely difficult to train. In fact, in all of our experiments we saw \emph{no} evidence of overfitting. While there have been years of AI research providing the methods and intuition behind how to train networks with monotonically increasing activation functions, GroupSort is fairly new and relatively unstudied. We are hopeful that through more time and research, methods will arise that improve the training of these nets in order to construct larger, more expressive networks.

We have improved on our understanding of ULAs from $\mathbb{R}^n$ into $\mathbb{R}^m$ under the $L_\infty$ norm. There are many more norms that $\mathbb{R}^n$ can be imbued with. For example, we still do not know if there is a ULA for functions mapping between these spaces under the $L_2$ norm, not-to-mention general $L_p$ norms. Hence, there are still a lot of open questions about the existence of ULAs under certain conditions.

\section*{Acknowledgments}
Thank you to all the people who helped us bring this paper to print. In particular, 
thank you to Noah Marcus for enormously helpful proofreading.

\bibliography{ULA_bib}{}
\bibliographystyle{plain}
\end{document}